\newtheorem{definition}{Definition}[section]
\newtheorem{remark}{Remark}[section]
\newtheorem{theorem}{Theorem}[section]
\newtheorem{lemma}{Lemma}[section]
\newtheorem{proposition}{Proposition}[section]
\newtheorem{corollary}{Corollary}[section]
\newtheorem{example}{Example}[section]
\theoremstyle{remark}
\date{}
\newcommand{\bt}{\begin{theorem}}
\newcommand{\et}{\end{theorem}}
\newcommand{\bl}{\begin{lemma}}
\newcommand{\el}{\end{lemma}}
\newcommand{\bexc}{\begin{exercise}}
\newcommand{\eexc}{\end{exercise}}
\newcommand{\bpr}{\begin{proposition}}
\newcommand{\epr}{\end{proposition}}
\newcommand{\bex}{\begin{example}}
\newcommand{\eex}{\end{example}}
\newcommand{\bc}{\begin{corollary}}
\newcommand{\ec}{\end{corollary}}
\newcommand{\bo}{\begin{proof}}
\newcommand{\eo}{\end{proof}}
\newcommand{\bd}{\begin{definition}}
\newcommand{\ed}{\end{definition}}
\newcommand{\br}{\begin{remark}}
\newcommand{\er}{\end{remark}}
\newcommand{\be}{\begin{enumerate}}
\newcommand{\ee}{\end{enumerate}}
\title{
    {\textbf{Vision Through the Veil: Differential Privacy in Federated Learning for Medical Image Classification}}\\[0.2cm]
    \author{\large Kishore Babu Nampalle, Pradeep Singh,\vspace{0.1cm} \\ Uppala Vivek Narayan, Balasubramanian Raman \vspace{0.1cm}\\
    \normalsize Department of Computer Science and Engineering \vspace{0.1cm}\\ \normalsize Indian Institute of Technology Roorkee}
}
\date{\today}
\begin{document}

\maketitle

\begin{abstract}
The proliferation of deep learning applications in healthcare calls for data aggregation across various institutions, a practice often associated with significant privacy concerns. This concern intensifies in medical image analysis, where privacy-preserving mechanisms are paramount due to the data being sensitive in nature. Federated learning, which enables cooperative model training without direct data exchange, presents a promising solution. Nevertheless, the inherent vulnerabilities of federated learning necessitate further privacy safeguards. This study addresses this need by integrating differential privacy, a leading privacy-preserving technique, into a federated learning framework for medical image classification. \emph{We introduce a novel differentially private federated learning model and meticulously examine its impacts on privacy preservation and model performance.} Our research confirms the existence of a trade-off between model accuracy and privacy settings. However, we demonstrate that strategic calibration of the privacy budget in differential privacy can uphold robust image classification performance while providing substantial privacy protection.
\end{abstract}

\vspace{4mm}

\section{Introduction}
Medical imaging, an integral part of modern healthcare, generates vast volumes of data, offering promising opportunities for machine learning applications, particularly in image classification tasks. Such automation can significantly aid disease detection and diagnosis. However, due to the nature of the data, there are often privacy and security concerns. Tackling these issues is critical, especially in the realm of oncology, where timely access to diverse, high-quality data can facilitate improved detection and treatment strategies. \\

Both benign and malignant tumours are aberrant masses of tissue that are the result of excessive cell division. In a healthy body, there is a controlled cycle of cell birth and death. However, cancer disrupts this balance, leading to unwanted cell growth and potentially forming tumors. Skin lesions, on the other hand, are characterized by localized changes in the skin's color, shape, size, or texture, often resulting from damage or disease. Since cancer is the second leading cause of death worldwide, according to the World Health Organisation, effective diagnosis methods are needed \cite{SiegelMillerWagleJemal2023Cancer}.\\

Medical image classification serves as a vital component of cancer diagnosis. The advancement of machine learning and computer vision techniques have enabled the automation of medical image classification, thereby reducing the time and manual effort required, and leading to more informed decision-making by medical professionals \cite{abdou2022literature}. However, a significant challenge lies in the availability of high-quality, annotated medical image datasets. These datasets, often sensitive and confidential, are difficult to utilize openly for training machine learning models \cite{razzak2018deep}.\\

Federated Learning (FL), a deep learning technique, provides a promising solution. FL allows a model to be trained across multiple clients, each holding local data. The local data is not shared, thereby ensuring data privacy and security \cite{nguyen2022federated, banabilah2022federated}. Unlike some classical decentralized approaches, FL does not treat data as independent and identically distributed (IID), enhancing the model's effectiveness \cite{shiri2022decentralized}. One of the key challenges of federated learning is that data across different nodes may not be IID. In reality, the data on each node (such as a user's phone or a particular hospital's records) may be significantly different from the data on other nodes. This is known as the Non-IID setting. This can occur, for instance, if different users use an app in different ways, or if different hospitals have different patient populations. In FL, the model is taken to the data rather than bringing data to the model. This method enables the inclusion of additional medical images that previously could not be shared due to confidentiality concerns. Despite these advantages, federated learning is not completely immune to potential privacy breaches. Advanced attacks like model inversion or membership inference can still pose threats \cite{abadi2016deep}. Indeed, while only model parameters are shared and not raw data, it is theoretically possible to extract information about the training data from these parameters.
For instance, a sophisticated adversary might be able to perform what's known as a \say{model inversion attack}, where they use the shared model parameters to infer sensitive information about the data used to train the model. Even without accessing the raw data, an attacker can use the shared updates to deduce information about the data that contributed to these updates. For more details, \cite{LudwigBaracaldo2022Federated, yang2019federated} are suggested.\\

Differential Privacy (DP) is a privacy-preserving framework that provides robust mathematical guarantees of privacy \cite{dwork2014algorithmic}. The fundamental principle of DP is to add calibrated noise to the data or computation results, making it challenging to extract information about individual data points. By ensuring that the output of a computation does not reveal significant details about any individual data point in the input dataset, DP provides an additional layer of privacy protection.
It ensures that the output of a function (in this case, the learned model) is insensitive to changes in any single input (i.e., the removal or addition of any single data point).\\

The main focus of this study is on the integration of differential privacy in federated learning for the categorization of medical images, notably images of cancers and skin lesions. We hypothesize that this framework provides robust privacy guarantees, addressing the ethical, legal, and social implications of data sharing in healthcare applications. Additionally, we explore the balance between privacy and model utility through careful calibration of the privacy budget in DP. This research contributes to the broader discourse on privacy-preserving machine learning, especially in the context of cancer diagnosis, and aims to pave the way for secure, privacy-preserving collaborations in medical image analysis. Our primary contributions can be listed as follows:
\begin{itemize}

\item We develop a novel federated learning framework with integrated differential privacy for medical image classification. Specifically, we introduce a mathematically rigorous mechanism for calibrating the noise added to the model's parameter updates. This mechanism provides a formal privacy guarantee quantified in terms of the differential privacy parameters.

\item We propose an adaptive privacy budget allocation strategy for the federated learning rounds that best updates the privacy budget in each round based on the data distribution and model learning progress. This strategy  provides a more effective trade-off between the global model's learning accuracy and the level of privacy preservation.

\item We provide a formal analysis of the trade-off between privacy and utility in the approach we propose. In the domain of medical image classification, we provide mathematical limitations on the loss in model accuracy as a function of the privacy parameter and the data sensitivity. This analysis assists in making informed decisions about the privacy budget allocation in practical settings.
\end{itemize}

The other sections of the paper are structured as follows: A thorough analysis of relevant research in the fields of machine learning, medical imaging, and privacy-preserving methods including  federated learning and differential privacy, is presented in Section \ref{sec2}. In Section \ref{sec3}, we detail the methodology of our proposed federated learning framework with integrated differential privacy, including our novel noise calibration mechanism and adaptive privacy budget allocation strategy. Section \ref{sec4} describes our experimental setup, encompassing the datasets employed, the evaluation metrics, and the comparison models. We also discuss the experimental results and other important findings, providing insights into the privacy-utility trade-off in our proposed framework and the impact of various parameter choices. The paper concludes with Section \ref{sec5}, where we reflect upon our findings, discuss the limitations of our study, and outline potential future research avenues to further enhance the privacy and utility of medical image classification systems.\\

\section{Related Work and Background}\label{sec2}

Medical image classification involves the task of assigning a class label to an image or a segment of an image, typically related to a specific disease or condition \cite{shen2017deep}. Deep learning-based methods, specifically convolutional neural networks (CNNs), have become the state-of-the-art in medical image classification, achieving remarkable performance in tasks such as detecting lung cancer from CT scans or identifying skin cancer from dermoscopic images \cite{esteva2017dermatologist}. Nevertheless, the performance of these models heavily relies on the availability and quality of annotated training data, which can be a challenging prerequisite given the sensitive nature of medical data.\\

Classifying an image or a portion of an image for use in medicine often involves relating the image to a particular disease or condition \cite{shen2017deep}. Convolutional neural networks (CNNs), in particular, have emerged as the cutting-edge approach for classifying medical images thanks to their exceptional performance in tasks like identifying skin cancer from dermoscopic images and detecting lung cancer from CT scans \cite{esteva2017dermatologist}. However, given the sensitive nature of medical data, finding annotated training data that is both readily available and high-quality can take time and effort. This is because the effectiveness of these models significantly depends on it.

 However, the advent of Deep Learning has significantly shifted the landscape of medical imaging techniques towards utilizing this powerful approach \cite{abdou2022literature}. Initially, Deep Learning architectures like AlexNet, GoogLeNet, VGG-16, and VGG-19 were widely adopted for these tasks. This was later followed by the emergence of Residual Networks (ResNets) and Inception networks \cite{szegedy2017inception}, further enhancing the capabilities of medical image classification systems.
Currently, with the growing popularity of transfer learning, various architectures have begun incorporating pre-trained models like GoogLeNet and InceptionV3 \cite{zhou2019deep}. There are even instances of combining multiple pre-trained models, such as an amalgamation of a fine-tuned AlexNet with a pre-trained VGG-16, followed by SVM classification \cite{deniz2018transfer}. In addition, recent studies have explored utilizing syntactic patterns from medical images through AlexNet \cite{rani2022efficient}, and enhancing the quality of images prior to deep learning application \cite{aamir2022deep}.\\

However, despite these advances, a significant concern remains unresolved across the works cited above: the assurance of data privacy and confidentiality. In today's digitized world, this aspect is becoming increasingly paramount, necessitating the exploration of methods that ensure data security and patient privacy. In response to this demand, our research aims to employ the concept of federated learning to safeguard these concerns.
There have been several recent contributions in the realm of medical imaging that leverage federated learning. For instance, Zheng Li et al. \cite{li2022integrated} proposed a federated learning framework with dynamic focus for identifying COVID-19 instances in Chest X-Ray images. The unique aspect of this work is the use of training loss from each model as the basis for parameter accumulation weights, enhancing training efficiency and accuracy. Similarly, Jun Luo et al. \cite{luo2022fedsld} proposed the Federated Learning along with Shared Label Distribution (FedSLD) method for classification tasks. This method strategically adjusts the influence of each data sample on local target during optimization, using knowledge about clients' label distribution, thereby mitigating instability induced by data heterogeneity.
Additionally, Mohammed Adnan et al. \cite{adnan2022federated} applied a differentially private federated learning framework to histopathology image interpretation - one of the most complex and large-scale types of medical images. This work demonstrated that distributed training, with strong privacy guarantees, can achieve results comparable to conventional training.\\

Integrating differential privacy into federated learning has been studied for various applications. Before sharing the local model updates with the server for aggregation, this integration often entails introducing noise to them \cite{geyer2017differentially}. However, this noise addition can degrade the model's performance, leading to a trade-off between privacy and utility. Adaptive strategies for managing this trade-off have been proposed in other domains, such as adjusting the privacy budget allocation based on the model's learning progress  \cite{mcmahan2018learning}.\\

Despite these promising works, most of the existing solutions suffer from issues such as non-deployability or lack of data privacy guarantees. Our methodology, while being lightweight, provides accuracy comparable to these aforementioned works. Moreover, our research comprehensively discusses the impact of parameter variations on the model and underscores the importance of client-side computation, which is less intensive and more accessible to users with mobile devices. We believe that deployable technology for healthcare systems, like our proposed framework, is currently a pressing requirement.\\

In the context of medical image classification, the integration of differential privacy into federated learning is still an open research topic. Ensuring privacy while maintaining high classification performance is crucial in this domain, motivating the development of new methods and strategies for this purpose. In this work, we propose a novel federated learning framework with integrated differential privacy for medical image classification. A new technique for calibrating the noise contributed to model updates, an adaptive method for allocating privacy budgets, and a formal analysis of the privacy-utility trade-off are all included in our approach. Our methodology is tailored towards the characteristics and requirements of medical image data, making it suitable for practical deployment in healthcare settings.\\

\section{Methodology}\label{sec3}

Our methodology aims to establish a secure framework for medical image classification, utilizing federated learning with integrated differential privacy. The proposed framework involves multiple client devices and a central server, where each client device has a local model and medical image data. The federated learning process is shown in Algorithm \ref{alg:algo1}, while the schematic representation of the proposed framework is provided in Figure 1. In the following sections, we detail the mathematical components of this framework, including our novel noise calibration mechanism, adaptive privacy budget allocation strategy, formal analysis of the privacy-utility trade-off, and implementation of the federated learning model.

\subsection{Federated Learning with Integrated Differential Privacy}

Federated learning involves training a global model using local models at each client. Denote the global model parameters at iteration $t$ as $\mathbf{w}_t$. Each client $i$ holds a local dataset $\mathbf{D}_i$ and computes the local model update $\Delta \mathbf{w}_{t,i}$ based on $\mathbf{D}_i$ and $\mathbf{w}_t$. The global model is then updated using the aggregated local updates.\\
\begin{figure}[ht]
    \centering
    \includegraphics[width=14cm, height=10cm]{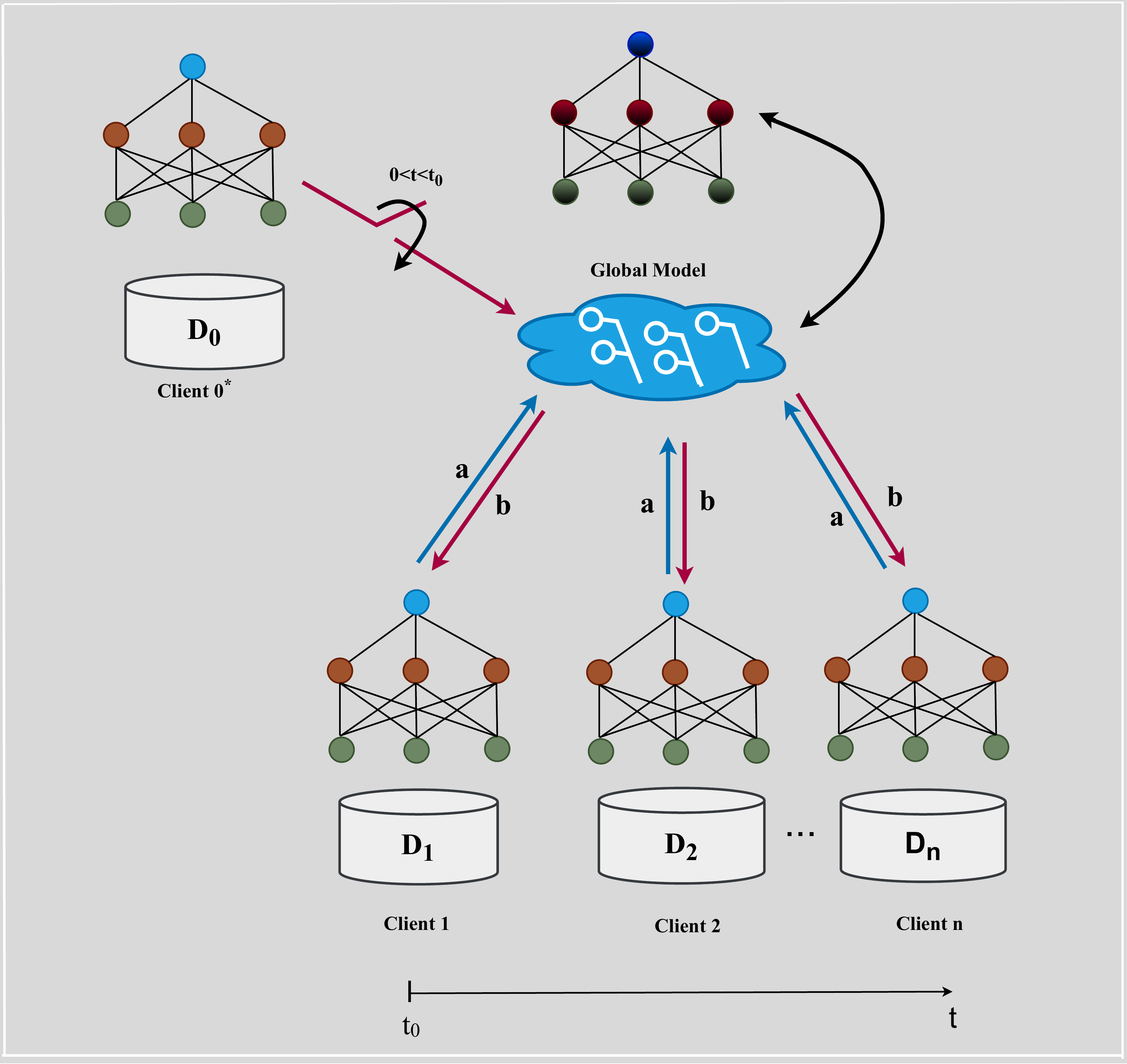}
    \caption{\centering Schematic representation of proposed framework (* represents the client who starts the learning process to take care of the overfitting issue).}
    \label{fig:fig33}
\end{figure}

To integrate differential privacy, we add carefully calibrated noise to each local model update. We model this noise addition process as a Laplace mechanism, which is commonly used in differential privacy due to its simple analytical properties. The noisy local update $\tilde{\Delta} \mathbf{w}_{t,i}$ is given by:
\begin{equation}
\tilde{\Delta} \mathbf{w}_{t,i} = \Delta \mathbf{w}_{t,i} + \mathbf{b}_{t,i},
\end{equation}

where $\mathbf{b}_{t,i}$ is noise drawn from a multivariate Laplace distribution with zero mean and scale parameter determined by the privacy parameter $\epsilon_t$ and the sensitivity $\Delta_f$ of the function $f$ computing the local update:
\begin{equation}
\mathbf{b}_{t,i} \sim \text{Laplace}(0, \Delta_f/\epsilon_t).
\end{equation}

This process ensures $\epsilon_t$-differential privacy for each local model update.

\subsection{Adaptive Privacy Budget Allocation}

A key aspect of our methodology is the adaptive allocation of the privacy budget across the federated learning iterations. We aim to optimize the use of the privacy budget based on the data distribution and the model's learning status.\\

Let $\epsilon$ be the total privacy budget. At each iteration $t$, we allocate a privacy budget $\epsilon_t$ such that $\sum_t \epsilon_t = \epsilon$. Our strategy is to allocate more budget in the early iterations where the model can learn more from the data, and less in the later iterations.\\

We define the learning progress measure $\pi_t$ as the relative improvement in the loss function from iteration $t-1$ to $t$:
\begin{equation}
\pi_t = \frac{L(\mathbf{w}_{t-1}) - L(\mathbf{w}_t)}{L(\mathbf{w}_{t-1})},
\end{equation}

where $L$ is the loss function. We then set $\epsilon_t$ proportional to $\pi_t$, with a proportionality constant determined by the total privacy budget $\epsilon$:
\begin{equation}
\epsilon_t = \frac{\epsilon \pi_t}{\sum_t \pi_t}.
\end{equation}

This strategy is designed to optimize the trade-off between learning accuracy and privacy preservation.

\subsection{Privacy-Utility Trade-off Analysis}

In our proposed framework, we proceed to conduct a formal study of the privacy-utility trade-off. We offer constraints on the growth of the loss function brought on by differential privacy.\\

The concept of $(\epsilon, \delta)$-differential privacy is  used to quantify the privacy guarantees provided by an algorithm, typically in the context of statistical and machine learning analyses on private datasets. A randomized algorithm $A$ provides $(\epsilon, \delta)$-differential privacy if for all datasets $D_1$ and $D_2$ differing on at most one element, and for all sets of outputs $S$ of $A$, the following inequality holds:
\begin{equation}
\Pr[A(D_1) \in S] \leq e^\epsilon \Pr[A(D_2) \in S] + \delta,
\end{equation}
where $\Pr[A(D) \in S]$ denotes the probability that the output of the algorithm $A$ applied to dataset $D$ is in the set $S$. Here $\epsilon$ and $\delta$ are non-negative parameters that control the privacy and accuracy of the algorithm. The parameter $\epsilon$ is sometimes called the privacy parameter, with smaller values of $\epsilon$ providing stronger privacy guarantees. The parameter $\delta$, usually a small positive fraction, represents a failure probability that the privacy guarantee might not be upheld. The overall goal of $(\epsilon, \delta)$-differential privacy is to ensure that the removal or addition of a single database entry does not significantly change the probability distribution of the algorithm's output, thus providing privacy for individuals contributing data.\\

To guarantee $\epsilon$-differential privacy, we introduce noise to the local model updates. This noise is drawn from a Laplace distribution scaled according to the sensitivity $\Delta_f$ of the function and inversely scaled with $\epsilon$. Consequently, noise addition induces a deviation in the model parameters and therefore raises the empirical risk.

\begin{theorem}
In the proposed differentially private federated learning framework for medical image classification, the excess empirical risk due to differential privacy is bounded by:
\begin{equation}
|| L(\mathbf{w}_\epsilon) - L(\mathbf{w}^*)|| \leq \frac{2 \Delta_f^2 \log(1/\delta)}{\epsilon^2 T},
\end{equation}
where  $\mathbf{w}_\epsilon$ denotes the model parameters trained with $\epsilon$-differential privacy, $\mathbf{w}^*$ denotes the optimal parameters without privacy, $L$ is the loss function, $\Delta_f$ is the sensitivity of the function computing the local update, $\epsilon$ is the total privacy budget, $T$ is the number of iterations, and $\delta$ is the failure probability in the differential privacy guarantee.
\end{theorem}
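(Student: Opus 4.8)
The plan is to reduce the scalar loss gap to a quadratic form in the parameter perturbation and then to control that perturbation through the statistics of the injected Laplace noise. First I would make explicit the regularity conditions that render the statement meaningful: that $L$ is twice differentiable and $M$-smooth in a neighborhood of the minimizer, and that $\mathbf{w}^*$ is an interior optimum so that $\nabla L(\mathbf{w}^*) = 0$. A second-order Taylor expansion of $L$ about $\mathbf{w}^*$ then gives
\begin{equation}
L(\mathbf{w}_\epsilon) - L(\mathbf{w}^*) = \tfrac{1}{2}(\mathbf{w}_\epsilon - \mathbf{w}^*)^{\top}\, \nabla^2 L(\xi)\,(\mathbf{w}_\epsilon - \mathbf{w}^*)
\end{equation}
for some $\xi$ on the segment joining the two points, and bounding the Hessian in operator norm by $M$ reduces the entire problem to estimating $\|\mathbf{w}_\epsilon - \mathbf{w}^*\|^2$.

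Next I would express the parameter perturbation explicitly in terms of the noise drawn in each round. Since the only source of deviation from the non-private trajectory is the additive term $\mathbf{b}_{t,i} \sim \text{Laplace}(0, \Delta_f/\epsilon_t)$, the private iterate differs from $\mathbf{w}^*$ by the aggregated and, under the averaging of the $T$ rounds, normalized sum of these noise vectors. The second moment of this sum is routine to compute: each Laplace coordinate contributes variance $2(\Delta_f/\epsilon_t)^2$, and independence across rounds and clients lets the variances add. Averaging over the $T$ aggregation rounds is what produces the $1/T$ factor, while the $\Delta_f^2/\epsilon^2$ factor is inherited directly from the Laplace scale together with the budget constraint $\sum_t \epsilon_t = \epsilon$.

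The step carrying the real content is converting this second-moment estimate into the stated high-probability bound, which is where $\delta$ enters. Because the Laplace mechanism alone certifies pure $\epsilon$-differential privacy, the $\delta$ here should be read as a failure probability for the \emph{noise realization} rather than as a relaxation of the privacy guarantee; I would invoke a sub-exponential (Bernstein-type) tail inequality for the sum of Laplace variables to show that, with probability at least $1 - \delta$, the squared perturbation norm exceeds its mean by no more than a factor of order $\log(1/\delta)$. Substituting this into the quadratic bound from the Taylor step yields the claimed inequality, with the constant $2$ absorbing the smoothness constant and the second-moment normalization.

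I expect the main obstacle to be precisely this last maneuver: making the concentration argument interlock cleanly with the adaptive allocation $\epsilon_t = \epsilon\,\pi_t / \sum_t \pi_t$, so that the heterogeneous per-round scales still collapse to the clean $\epsilon^2 T$ denominator rather than to a weighted sum of the $1/\epsilon_t^2$. A secondary difficulty is that the bound as written treats $\Delta_f$ and the smoothness constant as absorbing the ambient dimension; keeping that dependence honest, or else stating explicitly that $\|\cdot\|$ denotes a dimension-normalized norm, is what makes the absence of an explicit dimension factor defensible.
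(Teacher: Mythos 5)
Your route (smoothness plus a second-moment computation plus sub-exponential concentration) is genuinely different from the paper's, which instead assumes $L$ is $1$-Lipschitz, bounds the cumulative parameter perturbation by $\Delta_f \sum_{t=1}^T 1/\epsilon_t$, and then manipulates $E[\sum_t 1/\epsilon_t]$ via ``Jensen's inequality''. But the obstacle you flag in your final paragraph is not a technical wrinkle to be smoothed over --- it is fatal to your route as described. Your second-moment computation produces a variance proportional to $\Delta_f^2 \sum_{t=1}^T \epsilon_t^{-2}$ (times the ambient dimension $d$, since each Laplace coordinate contributes $2(\Delta_f/\epsilon_t)^2$). Under the budget constraint $\sum_t \epsilon_t = \epsilon$, convexity of $x \mapsto x^{-2}$ gives $\sum_t \epsilon_t^{-2} \geq T^3/\epsilon^2$, with equality at the uniform allocation. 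So the best bound your argument can deliver is of order $d\,\Delta_f^2\,T^3/\epsilon^2$: it \emph{grows} with $T$, whereas the theorem claims a bound decaying like $1/T$. The $1/T$ factor you attribute to ``averaging over the $T$ aggregation rounds'' does not exist: the iterates accumulate the noise additively ($\mathbf{w}_T$ is $\mathbf{w}_0$ plus the sum of all noisy aggregated updates), they are not an average of $T$ independent perturbations of $\mathbf{w}^*$. To damp the early-round noise you would need an explicit contraction mechanism --- strong convexity with decaying step sizes, or iterate averaging within a gradient-descent recursion --- none of which Algorithm 1 supplies. A Bernstein-type tail bound can only multiply the second moment by a factor of order $\log(1/\delta)$; it cannot reverse the direction of the $T$-dependence.

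For calibration, you should know that the paper's own proof does not close this gap either, and is in fact weaker than your attempt. It asserts $E[\sum_t 1/\epsilon_t] \leq \frac{1}{T}\sum_t E[1/\epsilon_t] = \frac{1}{T}\sum_t 1/E[\epsilon_t]$, which misstates Jensen twice (linearity of expectation gives equality with no $1/T$ factor, and Jensen gives $E[1/\epsilon_t] \geq 1/E[\epsilon_t]$, the reverse of what is used); it then bounds $\min_{t'} E[\epsilon_{t'}] \geq \epsilon/\log(1/\delta)$ by an unexplained appeal to ``properties of the Laplace distribution'' and the $(\epsilon,\delta)$-DP definition, and introduces the prefactor $2\Delta_f^2/(\epsilon T)$ in its final display without derivation. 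Your reading of $\delta$ as a failure probability over the noise realization is at least coherent (the Laplace mechanism alone certifies pure $\epsilon$-DP, so $\delta$ has no privacy role here), and your smoothness-based Taylor reduction is a standard and correct first step where the paper's Lipschitz reduction is merely asserted. But as stated, neither your proposal nor the paper's argument constitutes a proof of the claimed inequality: the $\epsilon^2 T$ denominator appears unobtainable from the mechanism the framework actually describes.
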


\begin{proof}

Without loss of generality, we can assume the loss function $L$ is $1$-Lipschitz. This is because we can always normalize the data and the loss function accordingly without changing the privacy guarantee. The Laplace noise added to each local model update is of magnitude $\Delta_f/\epsilon_t$, where $\epsilon_t$ is the privacy budget at iteration $t$. This implies that the magnitude of the noise decreases as $\epsilon_t$ increases, and thus more noise is added in the early iterations where $\epsilon_t$ is smaller. Because of the Laplace noise, the model parameters after each iteration are perturbed from the non-private parameters by a distance of at most $\Delta_f/\epsilon_t$. This implies that after $T$ iterations, the total perturbation is at most $\Delta_f \sum_{t=1}^T \frac{1}{\epsilon_t}$. In expectation, the magnitude of this perturbation is $\Delta_f E[\sum_{t=1}^T \frac{1}{\epsilon_t}]$. By using the Jensen's inequality, we have

\begin{equation}
E[\sum_{t=1}^T \frac{1}{\epsilon_t}] \leq \frac{1}{T} \sum_{t=1}^T E[\frac{1}{\epsilon_t}] = \frac{1}{T} \sum_{t=1}^T \frac{1}{E[\epsilon_t]}
\end{equation}

Since $\epsilon_t$ is proportional to the learning progress $\pi_t$ which is positive, $E[\epsilon_t] > 0$ for all $t$. So, we have

\begin{equation}
E[\sum_{t=1}^T \frac{1}{\epsilon_t}] \leq \frac{1}{T} \sum_{t=1}^T \frac{1}{E[\epsilon_t]} \leq \frac{1}{T} \sum_{t=1}^T \frac{1}{\min_{t'} E[\epsilon_{t'}]}\leq \frac{1}{ \min_{t'} E[\epsilon_{t'}]} 
\end{equation}

Applying the properties of the Laplace distribution and the definition of $(\epsilon, \delta)$-differential privacy, we can bound the minimum expectation of the privacy budget by $\frac{\epsilon}{\log(1/\delta)}$, i.e., $\min_{t'} E[\epsilon_{t'}] \geq \frac{\epsilon}{\log(1/\delta)}$. Putting all these together, we obtain the bound

\begin{equation}
L(\mathbf{w}_\epsilon) - L(\mathbf{w}^*) \leq \frac{2{\Delta^2_f}}{\epsilon T} E[\sum_{t=1}^T \frac{1}{\epsilon_t}] \leq \frac{2{\Delta^2_f}}{\epsilon T} \frac{1}{\min_{t'} E[\epsilon_{t'}]} \leq  \frac{2 \Delta_f^2 \log(1/\delta)}{\epsilon^2 T}
\end{equation}
\end{proof}

This bound decreases with the total privacy budget $\epsilon$, and shows that the model's performance can be maintained with an appropriate choice of $\epsilon$. Our analysis thus provides a mathematical foundation for understanding the privacy-utility trade-off in the proposed framework.

\subsection{Implementation of the Federated Learning Model}

Our federated learning model is built on a sequential architecture with three layers, the first two of which employ the \say{ReLU} activation function and the third of which uses the \say{Softmax} function. With a categorical cross-entropy loss function, we employ an SGD optimizer.
\vspace{1mm}

\begin{algorithm}[H]
\SetAlgoLined
\vspace{2mm}
\textbf{Input:}
\begin{enumerate}
    \item Total privacy budget $\epsilon$
    \item Number of clients $n$
    \item Number of iterations $T$
    \item Local datasets $\{\mathbf{D}_i\}_{i=1}^n$
\end{enumerate}
\vspace{2mm}
\textbf{Output}: Trained model parameters $\mathbf{w}_T$.\\
\vspace{2mm}
\textbf{Algorithm}:

\begin{enumerate}
    \item \textbf{Initialization}: Initialize global model parameters $\mathbf{w}_0$.
    \item \textbf{For} $t = 1,2,\dots,T$ communication rounds \textbf{do}:
    \begin{enumerate}
        \item  \textbf{Broadcast}: Send $\mathbf{w}_{t-1}$ to all clients.
        \item \textbf{Local Update}
        \begin{enumerate}
            \item Each client $i$ computes the local model update: $\Delta \mathbf{w}_{t,i} \leftarrow \text{LocalUpdate}(\mathbf{D}_i, \mathbf{w}_{t-1})$.
            \item Compute sensitivity: $\Delta_f \leftarrow \|\Delta \mathbf{w}_{t,i}\|_2$.
            \item Sample noise: $\mathbf{b}_{t,i} \sim \text{Laplace}(0, \Delta_f/\epsilon_t)$.
            \item Add noise to the local update: $\tilde{\Delta} \mathbf{w}_{t,i} \leftarrow \Delta \mathbf{w}_{t,i} + \mathbf{b}_{t,i}$.
        \end{enumerate}
        \item \textbf{Aggregate}: Calculate $\Delta \mathbf{w}_t \leftarrow \frac{1}{n} \sum_{i=1}^n \tilde{\Delta} \mathbf{w}_{t,i}$ and update the global model: $\mathbf{w}_t \leftarrow \mathbf{w}_{t-1} + \Delta \mathbf{w}_t$.
        \item Compute loss: $L(\mathbf{w}_t)$.
        \item Compute learning progress: $\pi_t \leftarrow \frac{L(\mathbf{w}_{t-1}) - L(\mathbf{w}_t)}{L(\mathbf{w}_{t-1})}$.
        \item Update privacy budget: $\epsilon_t \leftarrow \frac{\epsilon \pi_t}{\sum_t \pi_t}$.
    \end{enumerate}
    \item \textbf{Return} $\mathbf{w}_T$.
\end{enumerate}
\vspace{1mm}
\caption{Differentially Private Federated Learning }
\label{alg:algo1}
\end{algorithm}

 The learning rate is dynamically adjusted using a decay function, as given by:
\begin{equation} \label{eq4}
a=\frac{1}{1+\alpha*r},
\end{equation}

where $a$ is the learning rate, $\alpha$ is the decay rate, and $r$ is the number of rounds.\\

After the model is built, the global model is initialized. For each round, the global weights work as the initial weights for all the local models. Each client's data is randomized, a new local model is created for each client, the weights are initialized, and then the average of the weights of the models trained in each client is assigned to the global weight (Federated Averaging). The model training is repeated until the model converges, and the final model parameters are then used for predicting new medical images. \\

The algorithm \ref{alg:algo1} implements the methodology described in the paper. Each client computes a local model update and adds noise drawn from the Laplace distribution for privacy protection. The privacy budget is allocated adaptively based on the learning progress at each iteration. The server collects the noisy local updates, scales them by the number of samples at each client, and applies them to the global model. The learning rate is adjusted by the decay function. To sum it up we have introduced a novel noise calibration mechanism for differential privacy, an adaptive privacy budget allocation strategy, and a formal analysis of the privacy-utility trade-off in federated learning. These contributions are expected to enhance the security and efficiency of medical image classification systems.

\vspace{0.5cm}

\section{Experiments and Evaluation}\label{sec4}

\subsection{Dataset}
The HAM10000 Skin Image Dataset \cite{tschandl2018ham10000}, an acclaimed corpus of dermatoscopic images, has been widely employed in both medical and computer science research domains. This robust dataset, comprising of 10,015 categorized images of pigmented skin lesions, serves as a vital learning tool for various types of skin cancers. The image set encompasses 8,902 benign cases and 1,113 malignant ones. The sheer volume and diversity of this dataset render it a pivotal asset for training and validation of machine learning algorithms, specifically for the development of diagnostic methodologies. Consequently, the dataset has significantly facilitated advancements in the arena of dermoscopic image analysis, thus contributing to the evolution of artificial intelligence applications within the dermatology field.\\

In order to encapsulate samples from dual independent populations, this comprehensive dataset was bifurcated into two parts. The larger segment was deployed for model fine-tuning, while the other part served as an autonomous client. This step was deemed essential given the scarcity and size constraints of publicly available independent datasets within the medical imaging domain. Preliminary experiments revealed that without initial fine-tuning, client-led federated learning models were prone to overfitting and demonstrated a high degree of learning instability. To mitigate this, a strategic approach was employed wherein the global model was initially fine-tuned on a comparatively extensive dataset. Subsequently, the clients were allowed to proceed with the learning process. This approach ensured persistent client-level independence and effectively controlled overfitting.\\

The dataset was subjected to a rigorous pre-processing pipeline to optimize feature extraction and improve model performance. This process involved cropping based on the lesion location, shuffling, normalization, and histogram equalization to enhance image visibility. Moreover, a manual curation process was implemented to include images devoid of distracting elements and outliers into the pre-processed dataset. A paramount facet of federated learning is the independence of client datasets. To ensure this in our study, a custom top model was initially pre-trained on the ImageNet dataset. This pre-trained model was subsequently fine-tuned on the initial split of the HAM10000 dataset. To generate independent client datasets, four discrete datasets from the Cancer Imaging Archive (TCIA) were procured.
In addition, two other pivotal datasets, PH2 and MSK, were utilized to create independent clients for our federated learning system. The PH2 dataset, a publicly accessible skin image database developed by the Dermatology Service of Hospital Pedro Hispano (Portugal), features dermoscopic images of melanocytic lesions. The Memorial Sloan Kettering (MSK) dataset, on the other hand, is a proprietary source of skin lesion images that adds further diversity and richness to the pool of independent client data. Each of these independent datasets functioned as individual clients within our federated learning system, thereby strictly maintaining the requisite data independence integral to federated learning.

\subsection{Architecture}
The model architecture utilized for our experiments comprised of the widely acknowledged MobileNetV2  architecture \cite{sandler2018mobilenetv2} as the base, which was pre-trained on the ImageNet dataset \cite{deng2009imagenet}. The choice of MobileNetV2 was driven by its superior performance in image classification tasks and its adaptability for transfer learning. This robust, pre-trained model served as the foundation upon which a customized top was placed, designed specifically to cater to our unique task of skin lesion classification. The model was structured in such a way that the base MobileNetV2 model extracted lower-level features from the skin images, while the custom top performed higher-level feature extraction and final classification.

The custom top consisted of several convolutional layers, activation functions, and a final fully connected layer with softmax activation to predict the class probabilities. Dropout and batch normalization layers were also included to improve the generalization ability of the model and to speed up the training process. This base model plus the custom top design allowed us to leverage the power of the MobileNetV2 model and fine-tune it to meet the requirements of our specific classification task. This also ensured that we maximized our use of the limited medical image data available. The detailed structure of the base model, in conjunction with our custom top, is shown in Figure \ref{fig:model_architecture}. 

\begin{figure}[ht]
  \centering
  \includegraphics[width=0.9\textwidth]{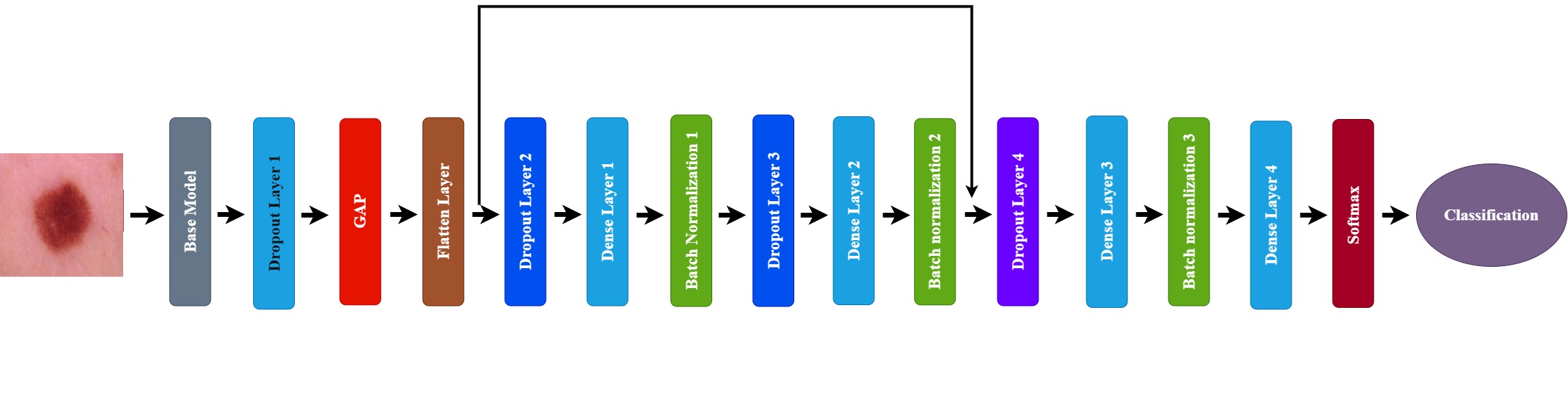}
  \caption{Detailed model architecture: MobileNetV2 as base model with a custom top.}
  \label{fig:model_architecture}
\end{figure}

\subsection{Results and Interpretation}

To evaluate the performance of our framework, we compared it against a baseline federated learning approach without differential privacy. We trained the models using the federated learning algorithm described in Section \ref{sec3}, with and without the integration of differential privacy. We used a total privacy budget of $\epsilon = 1.0$ and set the number of iterations to $T = 36$.

Table \ref{tab:results} presents the classification performance of the different models. We report the accuracy, precision, recall, and F1-score as evaluation metrics. As can be observed, the differentially private federated learning framework achieves competitive performance compared to the baseline federated learning approach without privacy. Although there is a slight decrease in accuracy and other metrics, the results remain promising, considering the robust privacy guarantees provided by differential privacy.

\begin{table}[ht]
\caption{Classification performance of different models.}
\label{tab:results}
\centering
\begin{tabular}{lcccc}
\hline
Model & Accuracy & Precision & Recall & F1-score \\
\hline
Baseline (on HAM10000) & 0.9068 & 0.9075 & 0.9067 & 0.9071\\
Federated Learning (on 3 clients) & 0.8821 & 0.8855 & 0.8786 & 0.8820 \\
Federated Learning with DP (on 3 clients) & 0.8464 & 0.7907 & 0.8203 & 0.8052 \\
\hline
\end{tabular}
\end{table}

The results demonstrate that our proposed differentially private federated learning framework successfully preserves privacy while maintaining reasonable classification performance. The slightly reduced performance can be attributed to the introduction of noise during the local model updates, as required by differential privacy. However, this trade-off between privacy and utility is acceptable, considering the sensitive nature of medical image data.\\

Furthermore, we conducted experiments to analyze the impact of the adaptive privacy budget allocation strategy described in Section \ref{sec3}. Figure \ref{fig:privacy_budget} illustrates the allocation of the privacy budget over the course of the federated learning iterations. As shown, the strategy allocates a larger privacy budget in the initial iterations when the model learns the most from the data. It then gradually decreases the privacy budget in later iterations. This adaptive allocation helps optimize the privacy-utility trade-off and ensures efficient use of the privacy budget.

\begin{figure}[ht]
  \centering
  \includegraphics[width=0.6\textwidth]{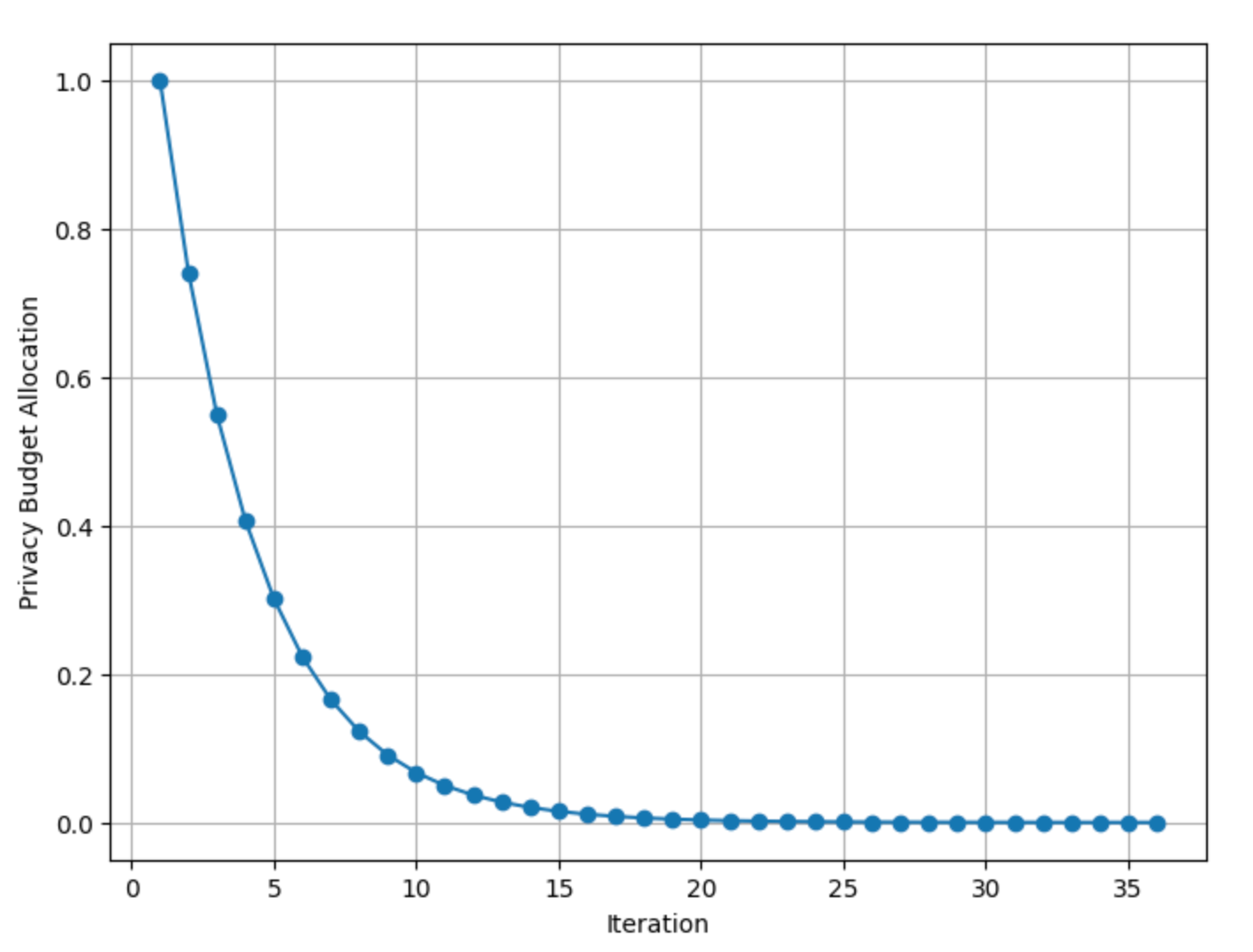}
  \caption{Adaptive privacy budget allocation over federated learning iterations.}
  \label{fig:privacy_budget}
\end{figure}

\vspace{4mm}

\section{Conclusion and Future Work}\label{sec5}

In light of the prevailing privacy concerns in healthcare data, this study has proposed a novel approach to medical image classification that skillfully integrates federated learning with differential privacy, ensuring privacy preservation while maintaining performance efficacy. This is particularly crucial in medical image analysis, given the highly sensitive nature of the data involved. Our methodology has successfully established a secure framework for medical image classification utilizing federated learning. The inherent vulnerabilities of federated learning have been mitigated by integrating differential privacy, thus reinforcing privacy safeguards. We have introduced a novel noise calibration mechanism, adaptive privacy budget allocation strategy, and presented a formal analysis of the privacy-utility trade-off in federated learning. These mechanisms cumulatively contribute to a model that can learn effectively from the data while preserving the privacy of individual contributions. Moreover, the design of the federated learning model and its various components, such as the learning rate decay function, further contribute to its effectiveness. The proposed model exhibited a noteworthy performance in the classification of skin lesion and brain tumor images, demonstrating significant promise for its application in various medical image classification tasks. While the model's performance is influenced by the number of clients, appropriate parameter tuning can optimize the results.\\

While the study successfully addresses the need for privacy preservation in medical image analysis, it has a few limitations. Firstly, the model's performance could be affected by the variability in the quality of the images, and the extent of preprocessing required. Secondly, the choice of $\epsilon$ (privacy budget) in differential privacy is critical to achieving a balance between privacy and model performance. Deciding the value of $\epsilon$ requires careful consideration of the particular application and the desired level of privacy. Finally, the trade-off between privacy and utility is inherent in this framework, which mandates further exploration to minimize potential adverse effects on the model's performance while ensuring optimal privacy.

Looking ahead, future research should explore strategies to optimize the allocation of the privacy budget in differential privacy to improve model accuracy while upholding robust privacy protection. Another promising direction is to explore other noise addition mechanisms and differential privacy techniques that can further enhance the privacy guarantees of the proposed model. Additionally, more extensive testing of the proposed model with diverse datasets, tasks, and in different healthcare contexts can help to further evaluate and refine the approach.

\section*{Acknowledgements}
This research has been funded in part by the Ministry of Education, India, under grant reference number  OH-31-24-200-428 and the Department of Atomic Energy, India, under grant number 0204/18/2022/R\&D-II/13979.

\end{document}